\newtheorem{proposition}{Proposition}
\newtheorem*{aproposition}{Proposition}
 \renewcommand{\paragraph}[1]{
    \vspace{-0.5mm}
    \noindent\textbf{#1} 
 }
\definecolor{YoimiyaOrange}{RGB}{239,145,099}
\definecolor{YoimiyaBrown}{RGB}{097,030,030}
\definecolor{SeabornGreen}{RGB}{2,158,115}
\newcommand{\revise}[1]{\textcolor{black}{#1}}
\title{
CAT: Closed-loop Adversarial Training for
\\
Safe End-to-End Driving
}
\author{
 Linrui Zhang\\
 Tsinghua University \\
  \texttt{zhanglr.auto@gmail.com} \\
 \And
 Zhenghao Peng\\
 University of California, Los Angeles\\
 \texttt{pzh@cs.ucla.edu}
 \And
 Quanyi Li\\
 The University of Edinburgh\\
 \texttt{quanyili0057@gmail.com}
 \And
 Bolei Zhou\\
 University of California, Los Angeles\\
 \texttt{bolei@cs.ucla.edu}
}
\begin{document}
\maketitle

\begin{abstract}
Driving safety is a top priority for autonomous vehicles. Orthogonal to prior work handling accident-prone traffic events by algorithm designs at the policy level, we investigate a \textbf{C}losed-loop \textbf{A}dversarial \textbf{T}raining (CAT) framework for safe end-to-end driving in this paper through the lens of environment augmentation. CAT aims to continuously improve the safety of driving agents by training the agent on safety-critical scenarios that are dynamically generated over time. A novel resampling technique is developed to turn log-replay real-world driving scenarios into safety-critical ones via probabilistic factorization, where the adversarial traffic generation is modeled as the multiplication of standard motion prediction sub-problems. Consequently, CAT can launch more efficient physical attacks compared to existing safety-critical scenario generation methods and yields a significantly less computational cost in the iterative learning pipeline. We incorporate CAT into the MetaDrive simulator and validate our approach on hundreds of driving scenarios imported from real-world driving datasets. Experimental results demonstrate that CAT can effectively generate adversarial scenarios countering the agent being trained. After training, the agent can achieve superior driving safety in both log-replay and safety-critical traffic scenarios on the held-out test set. Code and data are available at \href{https://metadriverse.github.io/cat}{https://metadriverse.github.io/cat}.
\end{abstract}

\section{Introduction}
While end-to-end driving has achieved promising performance in urban piloting~\cite{dosovitskiy2017carla} and track racing~\cite{herman2021learn}, 
safely handling accident-prone traffic events is one of the crucial capabilities to achieve for autonomous driving (AD). Benchmarking the safety and performance of an AI driving agent in simulation is a stepping stone for the real-world deployment~\cite{xu2022safebench}.
However, it is insufficient to train or evaluate an end-to-end driving agents on traffic scenarios only retrieved from real-world traffic datasets~\cite{ettinger2021large,caesar2021nuplan} since accident-prone events are extremely rare and difficult to collect in practice~\cite{favaro2017examining,sinha2021crash}.

Prior work improves the driving agent against safety-critical scenarios through various methods such as rule-based reasoning~\cite{mirchevska2018high}, motion verification~\cite{isele2018safe}, and constrained reinforcement learning~\cite{wen2020safe}. Orthogonal to the elaborate algorithm designs at the policy level, recent studies obtain robust driving policies at the environment level by creating a set of accident-prone scenarios before hand as augmented training samples~\cite{ding2023causalaf,hanselmann2022king}. Nevertheless, the learned policy may still easily overfit the fixed set of training samples thus fail to handle unknown hazards~\cite{katare2022bias}. 

An alternate approach is to dynamically generate challenging scenarios that match the current capability of the driving agent being trained in a closed-loop manner.
However, the state-of-the-art safety-critical scenario generation methods~\cite{ding2023causalaf,hanselmann2022king,rempe2022generating} are not yet applicable for that purpose due to the following issues: (i) \textit{Scene generalizability}: probabilistic graph methods like CausalAF~\cite{ding2023causalaf} require human prior knowledge of each scene graph and thus cannot scale to large and complex driving datasets; (ii)  \textit{Model dependency}:  kinematics gradient methods like KING~\cite{hanselmann2022king} relies on the forward simulation of the running policy and the backward propagation based on the environmental transition, which might not be accessible in the model-free end-to-end driving; (iii) \textit{Time efficiency:} autoregression-based generation methods like STRIVE~\cite{rempe2022generating} take minutes to optimize the adversarial traffic per scenario, which is time prohibitive for large-scale training with millions of episodes. 

\begin{figure}
    \centering
    \includegraphics[width=1\textwidth]{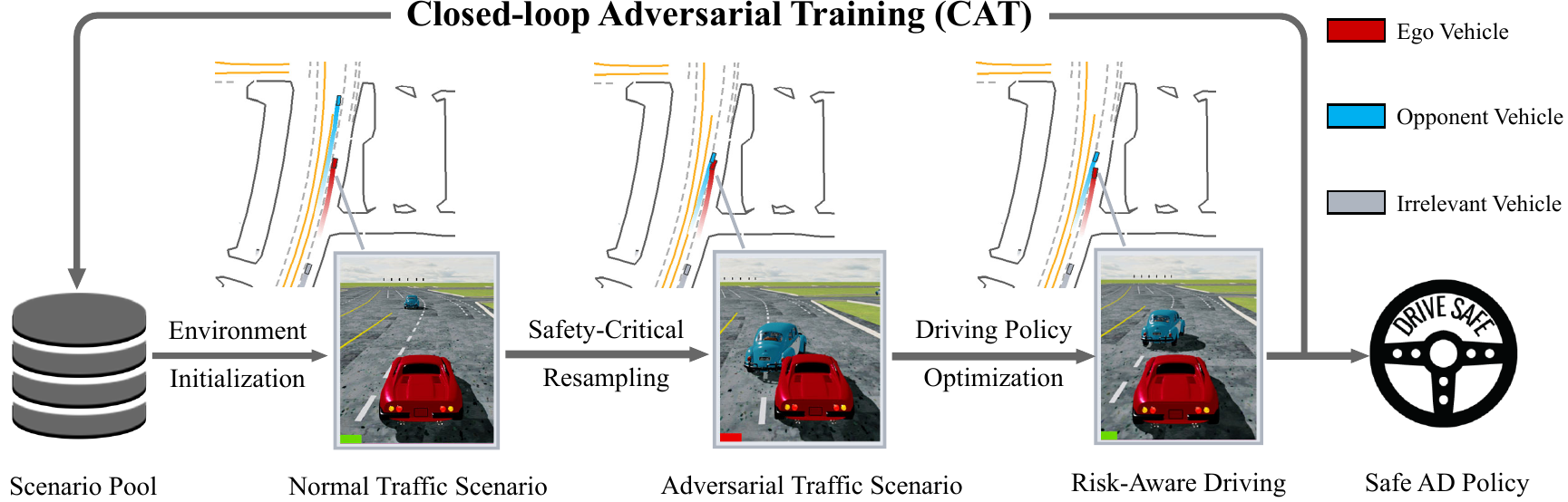}
    \caption{
    CAT iterates over safety-critical scenario generation and driving policy optimization in a closed-loop manner.
    In this example, the safety-critical resampling technique alters the behavior of the opponent vehicle (blue car) such that it suddenly cuts into the lane of the ego vehicle (red car), enforcing the agent to learn risk-aware driving skills such as deceleration and yielding.
    }
    \label{fig:intro_new}
    \vspace{-1em}
\end{figure}

In this paper, we present the Closed-loop Adversarial Training (CAT) framework for safe end-to-end driving. As shown in Fig.~\ref{fig:intro_new}, CAT imports driving scenarios from real-world driving logs and then generates safety-critical counterparts as adversarial training environments tailored to the current driving policy. The agent continuously learns to address emerging challenges and improves risk awareness in a closed-loop pipeline. 
CAT directly launches physical attacks against the estimated ego trajectory, the proposed framework is thus agnostic to the driving policy used by the agent and is compatible with a wide range of end-to-end learning approaches, such as reinforcement learning (RL)~\cite{kiran2021deep}, imitation learning (IL)~\cite{zhu2021survey}, and human-in-the-loop feedback (HF)~\cite{peng2022safe}.

One crucial component of the proposed framework is a novel factorized safety-critical resampling technique that efficiently turns logged driving scenarios into safety-critical ones during training. Specifically, we cast the safety-critical traffic generation as the risk-conditioned Bayesian probability maximization and then decompose it into the multiplication of standard motion forecasting sub-problems. Thus, we can utilize off-the-shelf motion forecasting models~\cite{gu2021densetnt,sun2022m2i} as the learned prior to generate adversarial scenarios with high fidelity, diversity, and efficiency. Compared to previous safety-critical traffic generation methods, the proposed technique obtains a competitive attack success rate while significantly reducing the computational cost, making the CAT framework effective and efficient for closed-loop end-to-end driving policy training.

To demonstrate the efficacy of our approach,  we incorporate the proposed CAT framework into the MetaDrive simulator~\cite{li2022metadrive} and compose adversarial traffic environments from five hundred complex driving scenarios in a closed-loop manner to train RL-based driving agents without any ad-hoc safety designs. Experimental results show that CAT generates realistic and challenging physical attacks, and the resulting agent obtains superior driving safety in both log-replay and adversarial traffic scenarios on the held-out test set.
\revise{The contributions of this paper are summarized as follows:}
\begin{enumerate}
[leftmargin=2em,topsep=0pt,label=\roman*),itemsep=0em]
    \item \revise{We propose an efficient safety-critical scenario generation technique by resampling the learned traffic prior, which improves attack success rate and lowers computation cost compared to prior work, making continuous adversarial scenario generation viable in closed-loop AD training.
    }
    \item \revise{We present a closed-loop adversarial training framework for safe end-to-end driving based on the above technique and demonstrate the proposed framework substantially improves AI driving safety in complex testing scenarios imported from the real world.}
\end{enumerate}

\section{Related Work}

\textbf{Adversarial Training for Autonomous Driving.}
Deep neural networks (DNNs), pervasively used in learning-based AD systems, are found vulnerable to adversarial attacks~\cite{carlini2017towards,zhang2022adversarial}. Recent studies tend to manipulate the physical environment to generate realistic yet adversarial observation sequences from LiDAR inputs~\cite{wang2021advsim}, camera inputs~\cite{boloor2019simple}, and other physical-world-resilient objectives~\cite{kong2020physgan}. Compared to the above work focusing on perception, adversarial training for AD decision-making is much less explored. \citet{ma2018improved} first investigate the adversarial RL on a single autonomous driving scenario. \citet{wachi2019failure} employs the multi-agent DDPG algorithm~\cite{lowe2017multi} to enforce the competition between player and non-player vehicles. In addition to algorithmic level designs, a more natural but less explored approach is to iteratively propose challenging scenarios during training~\cite{anzalone2022end}. There is a line of works on evolving training environments in RL~\cite{wang2019poet,wang2020enhanced}. However, existing approaches are evaluated only in simplified environments like bipedal walker and heuristically modify the terrain or static barriers, which is not sufficient for complex AD tasks. In this work, we focus on generating realistic and safety-critical traffic scenarios to facilitate closed-loop adversarial training for end-to-end driving.

\textbf{Safety-Critical Traffic Scenario Generation.} Safety-critical traffic scenario generation is of great value in adaptive stress testing~\cite{zhong2021survey} and corner case analysis~\cite{riedmaier2020survey} for the research and development of autonomous vehicles.
L2C \cite{ding2020learning} learns to place and trigger a cyclist to collide with the target vehicle via RL algorithms, but it goes far to model complex vehicle interactions in real-world scenes.
For robust imitation learning, kinematics gradients~\cite{hanselmann2022king} and black-box optimization~\cite{wang2021advsim} can be used to magnify traffic risks. However, it relies on the forward simulation of the running policy and the backward propagation based on the vehicle kinematics, which might not be accessible in model-free end-to-end driving.
CausalAF~\cite{ding2023causalaf} builds scenario causal graphs to uncover behavior of interest and generates additional training samples to improve the robustness of driving policies. Nevertheless, the evaluations are limited to three scenarios since it requires human prior knowledge of each scene and thus hardly scale to a larger dataset. STRIVE~\cite{rempe2022generating} constructs a latent space to constrain the traffic prior and searches for the best responsive mapping via gradient-based optimization on that dense representation. Despite its impressive results on realistic traffic flows, the autoregression on raster maps takes several minutes to optimize the adversarial traffic for each scene, which brings about a costly computational burden for periodic policy optimization. 
We refer to the survey~\cite{ding2023survey} for more detailed safety-critical scenario generation methodologies. 
Different from the above literature, we propose a novel adversarial traffic generation algorithm for real-world scenarios with an admissible time consumption, making it viable for large-scale policy iterations involving millions of episodes.

\section{Method}

In this section, we first formulate the closed-loop adversarial training (CAT) for safe end-to-end driving as a min-max problem in the context of RL, and then introduce the factorization of the learned traffic prior so as to generate adversarial driving scenarios efficiently in practice.

\subsection{Problem Formulation}
\revise{
End-to-end driving directly uses raw sensor data as the inputs and outputs the low-level control command. 
Safe end-to-end driving incorporates risk-awareness into the above end-to-end pipeline and aims to minimize traffic accidents while maintaining the performance of route completion.} 
We focus on reinforcement learning (RL)-based driving policy in this work, though the proposed CAT can be extended to accommodate a range of end-to-end driving policies. In our scope, the driving task can be formulated as Markov Decision Process (MDP)~\citep{sutton2018reinforcement} in the form of $(S,A,R,f)$. 
$S$ and $A$ denote the state and action spaces, respectively. \revise{$S$ includes maps sensor readings such as camera images or LiDAR point cloud, high-level navigation commands and vehicle states. $A$ consists of low-level control commands like steering, throttle and brake.}
The reward function can be defined as $R = d - \revise{\eta} c$, wherein $d$ is the displacement toward the destination, $c$ is a boolean value indicating collision with other objects and $\eta$ is a hyper-parameter for the reward shaping.
$f$ is the transition function to describe the dynamics of the traffic scenario. The goal is to maximize the expected return $J(\pi,f) = \mathop{\mathbb{E}}_{\tau\sim \pi}\big [ \sum^T_{t=0} R(s_t,a_t)\big ]$  the driving policy $\pi$ receives within the time horizon $T$,
where $\tau\sim\pi$ is short handed for $a_t \sim \pi(\cdot | s_t), s_{t+1} \sim f(\cdot | s_t,a_t)$. CAT aims to enhance the robustness of the learning agent via the following adversarial optimization:
\begin{equation}\label{eq:1}
    \mathop{\max}_{\pi} \mathop{\min}_{f^{Adv}\in \mathcal{F}} \ \ J(\pi,f^{Adv}).
\end{equation}
Here, the adversarial transition function $f^{Adv}$ must be within the feasible set $\mathcal{F}$ that is aligned with realistic traffic distribution, otherwise the learned driving policy $\pi$ is not applicable in practice.

The fundamental problem is to construct $f^{Adv}$ by generating compliant future traffic trajectories that are prone to collisions with the agent's rollouts. 
To formalize the traffic collision, we denote the vehicle controlled by the learning agent as the ego vehicle (Ego) and other vehicles as opponent vehicles (Op) and represent a traffic scenario as a tuple $({M},{S}^{\text{Ego}}_{1:T},\boldsymbol{{S}}^{\text{Op}}_{1:T})$ with duration $T$ time steps. 
Here, the High-Definition (HD) road map ${M}$ consists of road shapes, traffic signs, traffic lights, etc. ${S}^{\text{Ego}}_{1:t}$ denotes the past states of the ego vehicle. $\boldsymbol{{S}}^{\text{Op}}_{1:t}$ is an $N$-element array $[{S}^{\text{Op}_1}_{1:t},...,{S}^{\text{Op}_\text{N}}_{1:t}]$, wherein each element stands for the past states of the corresponding opponent. For simplicity, we denote $X = ({M},{S}^{\text{Ego}}_{1:t},\boldsymbol{{S}}^{\text{Op}}_{1:t})$ as the information cutoff by step $t$ and $Y^\text{Ego} = S^\text{Ego}_{t: T}$, $\boldsymbol Y^\text{Op} = \boldsymbol S^\text{Op}_{t: T}$ are the future trajectories of ego and opponent starting from $t$, respectively. $Y^\text{Ego} $ is conditioned on the RL agent $\pi$. The cutoff step $t$ is fixed.
We define a binary random variable $Coll = \{True, False\}$ to denote whether $Y^\text{Ego}$ collides with $\boldsymbol Y^\text{Op}$. 
Considering that the opponent vehicle must launch effective attacks based on the potential ego behavior which is responsive to the $\mathbf Y^{\text{Op}}$,
the opponents' trajectories $\boldsymbol{{Y}}^{\text{Op}}$ and the ego vehicle's trajectory ${Y}^{\text{Ego}}$ are thus not independent. Therefore,
we model $\boldsymbol{{Y}}^{\text{Op}}$ and  ${Y}^{\text{Ego}}$ jointly and the safety-critical scenario distribution is expressed as:
\begin{equation}\label{eq:2}
   \mathbb{P}({Y}^{\text{Ego}},\boldsymbol{{Y}}^{\text{Op}} | Coll = True,X)
\end{equation}
Proposition~\ref{prop1} further shows that the construction of $f^{Adv}$ can be cast as marginal probability maximization of opponent trajectories $\boldsymbol{{Y}}^{\text{Op}}$ based on the above joint posterior distribution, where we assume that ${Y}^{\text{Ego}}$ generated by the current driving policy $\pi$ is sampled from $\mathcal{Y}(\pi)$. 

\begin{proposition}\label{prop1}

Suppose that $\pi$ forces the agent to approach the destination and the episode terminates when any traffic collision happens, then we have
\begin{equation}\label{eq:prop1}
    \mathop{\min}_{f^{Adv}\in \mathcal{F}} J(\pi, f^{Adv}) \Leftrightarrow \mathop{\max}_{\boldsymbol{{Y}}^{\text{Op}}} \ \ \sum_{{Y}^{\text{Ego}}\sim\mathcal{Y}(\pi)}\mathbb{P}({Y}^{\text{Ego}}, \boldsymbol{{Y}}^{\text{Op}} |  Coll = True , X).
\end{equation}
\vspace{-1em}
\end{proposition}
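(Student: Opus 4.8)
The plan is to establish the equivalence in three moves: (i) identify the adversarial transition function with the opponent future it produces, (ii) rewrite the expected return $J$ as an affine and strictly decreasing function of the collision probability, and (iii) convert that collision probability into the posterior on the right-hand side of \eqref{eq:prop1} by Bayes' rule. For step (i) I would note that within a single scenario the map $M$, the cutoff $t$, and the histories contained in $X$ are all fixed, so choosing $f^{\text{Adv}}$ amounts to choosing the opponents' future $\boldsymbol Y^{\text{Op}}$; the feasibility constraint $f^{\text{Adv}}\in\mathcal F$ then reads ``$\boldsymbol Y^{\text{Op}}$ is a realistic continuation,'' which I would model as $\boldsymbol Y^{\text{Op}}$ carrying non-negligible mass under the traffic prior $\mathbb P(\boldsymbol Y^{\text{Op}}\mid X)$. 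With this identification, $\min_{f^{\text{Adv}}\in\mathcal F}J(\pi,f^{\text{Adv}})$ becomes a minimization over $\boldsymbol Y^{\text{Op}}$ in which the coupled ego future $Y^{\text{Ego}}\sim\mathcal Y(\pi)$ is marginalized out.

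For step (ii) I would expand the return using the two hypotheses. Since the episode ends the first time $Coll=True$, every rollout contains at most one collision and it is the terminal step, so along any rollout $\sum_t R(s_t,a_t)=\sum_t d_t-\eta\,\mathbb{1}[Coll=True]$. Because $\pi$ drives the agent toward the destination, a collision-free rollout accrues the full route progress $D$, whereas a rollout ending in a collision accrues at most $D$ and additionally pays the penalty $\eta$; taking expectations over $Y^{\text{Ego}}\sim\mathcal Y(\pi)$ gives $J(\pi,\boldsymbol Y^{\text{Op}})=D-\big(D-\mathbb E[\sum_t R\mid Coll=True]\big)\,\mathbb P(Coll=True\mid\boldsymbol Y^{\text{Op}},X)$, where the bracketed coefficient is at least $\eta>0$. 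Hence $J$ is strictly decreasing in $\mathbb P(Coll=True\mid\boldsymbol Y^{\text{Op}},X)$, so $\arg\min_{\boldsymbol Y^{\text{Op}}}J(\pi,\boldsymbol Y^{\text{Op}})=\arg\max_{\boldsymbol Y^{\text{Op}}}\mathbb P(Coll=True\mid\boldsymbol Y^{\text{Op}},X)$.

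For step (iii) I would marginalize the coupled ego future and invoke Bayes. Writing $\mathbb P(Coll=True\mid\boldsymbol Y^{\text{Op}},X)=\sum_{Y^{\text{Ego}}\sim\mathcal Y(\pi)}\mathbb P(Y^{\text{Ego}},Coll=True\mid\boldsymbol Y^{\text{Op}},X)$ and using the chain rule $\mathbb P(Y^{\text{Ego}},\boldsymbol Y^{\text{Op}},Coll=True\mid X)=\mathbb P(Y^{\text{Ego}},\boldsymbol Y^{\text{Op}}\mid Coll=True,X)\,\mathbb P(Coll=True\mid X)$, the normalizer $\mathbb P(Coll=True\mid X)$ is independent of $\boldsymbol Y^{\text{Op}}$ and drops out of the $\arg\max$. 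What remains is exactly $\sum_{Y^{\text{Ego}}\sim\mathcal Y(\pi)}\mathbb P(Y^{\text{Ego}},\boldsymbol Y^{\text{Op}}\mid Coll=True,X)$, the right-hand side of \eqref{eq:prop1}. I would also observe that, because the Bayes numerator is likelihood times prior, this posterior quietly contains the factor $\mathbb P(\boldsymbol Y^{\text{Op}}\mid X)$ — which is precisely how the hard constraint $f^{\text{Adv}}\in\mathcal F$ on the left reappears as a soft realism prior on the right — and that I would keep the sum over $Y^{\text{Ego}}$ unevaluated (rather than collapsing it to $\mathbb P(\boldsymbol Y^{\text{Op}}\mid Coll=True,X)$), since retaining it is what later lets us factorize each summand into per-agent motion-forecasting subproblems.

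I expect the crux to be step (ii). The passage from ``minimize expected cumulative reward'' to ``maximize collision probability'' is exact only under idealizations: that the displacement reward behaves like a telescoping potential so route progress is path-independent, and/or that the collision penalty dominates the second-order dependence of the return on \emph{when} the collision occurs. I would therefore state these idealizations explicitly — they are partly granted by the proposition's hypotheses that $\pi$ approaches the destination and that the episode terminates on collision — and be upfront that ``$\Leftrightarrow$'' here means that the two optimization problems share the same maximizer, not that their optimal values coincide.
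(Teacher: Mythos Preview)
Your proposal reaches the same conclusion as the paper and at the same level of rigor, but the intermediate characterization is genuinely different. The paper's argument passes through the statement
\[
\min_{f^{\text{Adv}}\in\mathcal F} J(\pi,f^{\text{Adv}}) \;\Leftrightarrow\; \max_{\boldsymbol Y^{\text{Op}}}\ \mathbb P(\boldsymbol Y^{\text{Op}}\mid X)\quad\text{s.t.\ Ego controlled by }\pi\text{ collides with Op},
\]
i.e.\ among collision-causing opponent futures, pick the one most plausible under the traffic prior; it then ``rewrites'' this constrained problem as maximizing the posterior $\mathbb P(\boldsymbol Y^{\text{Op}}\mid\pi,Coll,X)$ and finally expands that as the marginal sum. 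Your route instead passes through $\arg\max_{\boldsymbol Y^{\text{Op}}}\mathbb P(Coll\mid\boldsymbol Y^{\text{Op}},X)$ and brings the prior in only at the Bayes step. Your step~(ii) is more carefully argued than the paper's corresponding step --- you make explicit the telescoping/large-penalty idealizations that the paper glosses over, and your formulation copes more naturally with a stochastic $\pi$ (for which ``s.t.\ collision happens'' is ambiguous).

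One point to tighten in step~(iii): you write that after the chain rule ``the normalizer $\mathbb P(Coll\mid X)$ is independent of $\boldsymbol Y^{\text{Op}}$ and drops out,'' and that what remains is the RHS of \eqref{eq:prop1}. Strictly, passing from $\mathbb P(Y^{\text{Ego}},Coll\mid\boldsymbol Y^{\text{Op}},X)$ to $\mathbb P(Y^{\text{Ego}},\boldsymbol Y^{\text{Op}},Coll\mid X)$ also introduces the factor $\mathbb P(\boldsymbol Y^{\text{Op}}\mid X)$, which \emph{does} depend on $\boldsymbol Y^{\text{Op}}$; so the two $\arg\max$'s are not literally equal, they differ by exactly that prior. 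You do acknowledge this (``the posterior quietly contains the factor $\mathbb P(\boldsymbol Y^{\text{Op}}\mid X)$''), and your justification --- that this prior is how the hard constraint $f^{\text{Adv}}\in\mathcal F$ reappears as a soft realism weight --- is precisely the same heuristic move the paper makes when it ``rewrites'' its constrained problem as a posterior. Just be explicit that this is the single non-rigorous step (hard constraint $\to$ soft prior weighting), rather than presenting it as if only a constant dropped out.
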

\begin{proof}
See the Appendix A.
\end{proof}

\subsection{Factorized Safety-Critical Resampling}

The joint distribution in Eq.~\eqref{eq:prop1} is still intractable. However, under the assumptions that the ego vehicle's reactions are unidirectionally based on the future traffic, we can factorize  it  with the Bayesian formula as shown in Proposition~\ref{theorem:1}. 
\begin{proposition}
\label{theorem:1}
Suppose that ${Y}^{\text{Ego}}$ depends on $\boldsymbol{{Y}}^{\text{Op}}$ unidirectionally, then we have 
\begin{equation}
    \mathbb{P}({Y}^{\text{Ego}},\boldsymbol{{Y}}^{\text{Op}} | Coll = True, X) \propto \mathbb{P}(\boldsymbol{{Y}}^{\text{Op}}|{X}) \mathbb{P}({{Y}}^{\text{Ego}}|\boldsymbol{{Y}}^{\text{Op}},{X})\mathbb{P}(Coll=True|{Y}^{\text{Ego}},\boldsymbol{{Y}}^{\text{Op}}).
\end{equation}
\end{proposition}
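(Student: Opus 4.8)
The plan is to read the claimed relation as a Bayes'-rule rewriting of the generative law of a scenario, arranged so that its three right-hand factors are exactly the pieces CAT will later model. Two structural inputs are used. The first is the chain rule together with the ``unidirectional dependence'' hypothesis: writing the joint prior of the two trajectory bundles in the order ``opponents first, ego second'' gives $\mathbb{P}(Y^{\text{Ego}},\boldsymbol{Y}^{\text{Op}}\mid X)=\mathbb{P}(\boldsymbol{Y}^{\text{Op}}\mid X)\,\mathbb{P}(Y^{\text{Ego}}\mid\boldsymbol{Y}^{\text{Op}},X)$, and the hypothesis is what makes this ordering the meaningful one --- the opponents' future is drawn from a prior conditioning on the cutoff information $X$ only, and the ego trajectory is then produced in reaction to $(\boldsymbol{Y}^{\text{Op}},X)$ --- so the first factor is the ambient traffic prior rather than something that secretly depends on the ego's plan. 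The second input, implicit in the problem setup, is that an ego--opponent collision is a function of the two trajectory bundles, hence $Coll$ is conditionally independent of $X$ given $(Y^{\text{Ego}},\boldsymbol{Y}^{\text{Op}})$, i.e. $\mathbb{P}(Coll\mid Y^{\text{Ego}},\boldsymbol{Y}^{\text{Op}},X)=\mathbb{P}(Coll\mid Y^{\text{Ego}},\boldsymbol{Y}^{\text{Op}})$.

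Given these, the derivation is short. I would start from Bayes' theorem with $\{Coll=True\}$ playing the role of the observed evidence,
\[
\mathbb{P}(Y^{\text{Ego}},\boldsymbol{Y}^{\text{Op}}\mid Coll=True,X)=\frac{\mathbb{P}(Coll=True\mid Y^{\text{Ego}},\boldsymbol{Y}^{\text{Op}},X)\,\mathbb{P}(Y^{\text{Ego}},\boldsymbol{Y}^{\text{Op}}\mid X)}{\mathbb{P}(Coll=True\mid X)},
\]
then substitute the chain-rule factorization of $\mathbb{P}(Y^{\text{Ego}},\boldsymbol{Y}^{\text{Op}}\mid X)$ from the previous step and replace $\mathbb{P}(Coll=True\mid Y^{\text{Ego}},\boldsymbol{Y}^{\text{Op}},X)$ by $\mathbb{P}(Coll=True\mid Y^{\text{Ego}},\boldsymbol{Y}^{\text{Op}})$ using the conditional independence. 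The denominator $\mathbb{P}(Coll=True\mid X)$ depends only on $X$ and not on the generated trajectories $(Y^{\text{Ego}},\boldsymbol{Y}^{\text{Op}})$, so it is a constant with respect to the quantities of interest and can be absorbed into the proportionality sign, which yields exactly the stated relation.

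I expect the only point deserving comment to be the justification for dropping $X$ from the collision factor: since $Coll$ here denotes specifically a collision between the ego vehicle and an opponent, it is determined by the space--time overlap of the two trajectory bundles and carries no residual dependence on the map or the history folded into $X$, so I would present this as the operative modeling assumption rather than something to be proved. A minor formal caveat is that treating $\mathbb{P}(Coll=True\mid X)$ as a legitimate normalizer presumes it is positive, which holds for any scenario admitting a colliding completion; beyond that, everything is routine manipulation with the chain rule and Bayes' theorem, and no appeal to Proposition~\ref{prop1} is required.
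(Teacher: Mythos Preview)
Your proposal is correct and follows essentially the same route as the paper's proof: Bayes' theorem to bring $Coll=True$ from the conditioning side to a likelihood factor, the chain rule in the opponents-then-ego order justified by the unidirectional-dependence hypothesis, and the observation that $Coll$ depends only on the two trajectory bundles so $X$ drops from that factor. The only cosmetic difference is that the paper writes the intermediate joint as $\mathbb{P}(Y^{\text{Ego}},\boldsymbol{Y}^{\text{Op}},X)$ and discards $\mathbb{P}(X)$ at the end, whereas you keep everything conditional on $X$ and absorb the normalizer $\mathbb{P}(Coll=True\mid X)$ into the proportionality; these are equivalent presentations.
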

\begin{proof}
See the Appendix B.
\end{proof}
After the factorization, we can search the best responsive $^*\boldsymbol{{Y}}^{\text{Op}}$ to magnify the probability of traffic collisions with the ego agent as possible through the marginal probability maximization given as:
\begin{equation}\label{eq:4}
    \begin{aligned}
       & 
        \mathop{\max}_{\boldsymbol{{Y}}^{\text{Op}}} \ \ \sum_{{Y}^{\text{Ego}}\sim\mathcal{Y}(\pi)}\mathbb{P}({Y}^{\text{Ego}},\boldsymbol{{Y}}^{\text{Op}} | Coll = True,X) \\
        \propto  \ \ & \mathop{\max}_{\boldsymbol{{Y}}^{\text{Op}}} \ \ \underbrace{\mathbb{P}(\boldsymbol{{Y}}^{\text{Op}}|{X})}_{\text{1st Term}} \sum_{{Y}^{\text{Ego}}\sim\mathcal{Y}(\pi)} \underbrace{\mathbb{P}({{Y}}^{\text{Ego}}|\boldsymbol{{Y}}^{\text{Op}},{X})}_{\text{2nd Term}} \underbrace{\mathbb{P}(Coll=True|{Y}^{\text{Ego}},\boldsymbol{{Y}}^{\text{Op}})}_{\text{3rd Term}}. 
       \end{aligned}
\end{equation}

 It is beneficial to perform the above safety-critical traffic probability factorization since each term in Eq.~\eqref{eq:4} features a specific meaning and is tractable to handle. They are interpreted as follows: 

 \begin{enumerate}
[leftmargin=2em,topsep=0pt,label=\roman*),itemsep=0em]
     \item \textbf{Traffic prior.} The 1st term is the standard motion prediction problem in which we can leverage arbitrary probabilistic traffic models~\cite{gu2021densetnt,gilles2021home,varadarajan2022multipath++,shi2022motion} to portray the multi-modal trajectory distribution.  Taking the pre-trained model as the traffic prior enables the attack plausibility in complex scenarios without human specifications.
     \item \textbf{Ego estimation.} The 2nd term denotes the interactive ego trajectory yielding to the current state and upcoming traffic flow. The transition can be deterministic if the world model is learned or accessible under model-based settings~\cite{hanselmann2022king}. As for the inference of real-world-compliant traffic flows,  we can employ an interactive motion predictor~\cite{sun2022m2i} conditioned on known surrounding vehicles' trajectories to better reflects the ego compliance under risky interactions. 
     \item \textbf{Collision likelihood.} The 3rd term reflects the likelihood of a collision in the compositional future, which can be simulated directly or treated as a binary classifier to fit~\cite{wang2020real}. 
 \end{enumerate}

 As shown in Fig.~\ref{fig:sec3}, it is possible to approach the near-optimal adversarial trajectory via numerical optimization after each term is calculated.

\begin{figure}[tb]
  \centering
  {\includegraphics[width=1\textwidth]{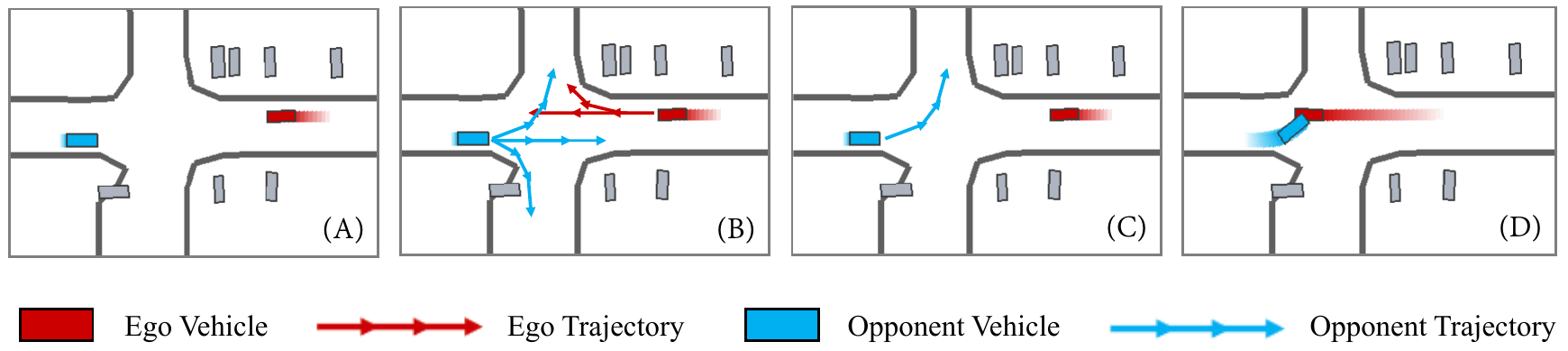}}
  \caption{Illustration of Factorized Safety-Critical Resampling. (\textbf{A}) We initialize 1s traffic history with the dense map representation. (\textbf{B}) We then predict the traffic prior as well as the agent's reaction. (\textbf{C}) The most accident-prone trajectory of the opponent vehicle is selected. (\textbf{D}) The generated scene is thus expected to be  safety-critical.
  }
  \label{fig:sec3}
\end{figure}

\subsection{Practical Implementation}

We summarize the overall implementation of the CAT framework for safe end-to-end driving in \textbf{Algorithm~\ref{algo:1}}. Recalling the training objective of CAT in Eq.~\eqref{eq:1}, we need to perform iterative optimization of policy learning and adversarial environment generation synchronously in a closed loop. The policy optimization can be achieved by arbitrary end-to-end driving policy learning approaches, e.g., a vanilla RL algorithm. 
Below, we focus on the adversarial environment generation, where we utilize the proposed factorized safety-critical resampling in Eq.~\eqref{eq:4}. 
Note that we make a simplification in CAT by enforcing a single rival to launch the attack in each generated scene while simply maneuvering the other vehicles to avoid self-collisions. 
This is reasonable since most traffic accidents are caused by two traffic participants rather than involving multiple vehicles.

We first predict the traffic prior $\mathbb{P}({{Y}}^{\text{Op}}|{X})$ using a pre-trained probabilistic traffic forecasting model $\mathcal{G}$. Considering the strong performance and the ease of sampling, we adopt DenseTNT~\cite{gu2021densetnt}, an anchor-free goal-based motion predictor, in this work. Specifically, we propose $M$ possible candidates $\{(Y^{\text{Op}}_i,P^{\text{Op}}_i)\}_{i=1}^M$ in parallel. The component $Y^{\text{Op}}_{i,k}$ in the $k$-th time step consists of the predicted position and yaw of the opponent vehicle. The probability of the trajectory $P^{\text{Op}}_i$ coincides with the probability of the corresponding destination goal.

We then tackle the ego estimation term $\mathbb{P}({{Y}}^{\text{Ego}}|{{Y}}^{\text{Op}},{X})$. 
Considering the non-stationary policy during training, we notice that the ego behavior does not necessarily match the logged behavior in the dataset. Consequently, directly utilizing the pre-trained traffic estimator derived from natural traffic flows~\cite{sun2022m2i} to provide ego trajectory probability has a severe bias. Alternatively, we record the latest $N$ rollouts of the ego vehicle in each scenario formed as $\{(Y^{\text{Ego}}_j,P^{\text{Ego}}_j)\}_{j=1}^N$ wherein we derive the likelihood of visited state sequences deduced by the current policy $\pi$: $P^{\text{Ego}}_{j,k+1} = P^{\text{Ego}}_{j,k}\cdot \pi(a_k|s_k)$. 

At last, we empirically estimate the collision likelihood $\mathbb{P}(Coll|{Y}^{\text{Ego}},{{Y}}^{\text{Op}})$.  Given the specific compositional future of ${Y}^{\text{Ego}}_j$ and ${{Y}}^{\text{Op}}_i$, we compute the minimal distance between their bounding boxes in the following steps and set the collision likelihood as $P^{Coll}_{i,j} = \alpha^{k}$ if the closest gap is $\le 0$ at timestep $k$. If the collisions happen at multiple step, the earliest $k$ will be used. Here, $\alpha \in (0,1]$ is a heuristic decay factor to reflect the increasing uncertainty of the traffic model.

\begin{algorithm}[t]
\begin{small}
\DontPrintSemicolon
  \KwInput{Initial driving policy $\pi$, learning algorithm $\mathcal{T}$, trajectory predictor $\mathcal{G}$, the simulator.}
  \KwOutput{Robust driving policy $\pi^*$.}
  Initialize the scenario pool $\mathcal{D} = \{X_1,X_2,...X_{|\mathcal{D}|}\}$ from real-world datasets. \\
  Initialize the ego trajectory buffer for each scenario. \\
  \While{$\pi$ is not converged}
  {
  Randomly sample a logged traffic $X$ from the scenario pool $\mathcal{D}$. \\
  Retrieve the ego trajectory buffer for this scenario $\{(Y^{\text{Ego}}_i,P^{\text{Ego}}_i)\}_{i=1}^N$.  \\
  $\{(Y^{\text{Op}}_i,P^{\text{Op}}_i)\}_{i=1}^M \sim \mathcal{G}(X)$ \Comment*{Generate the traffic prior, $M$ Op's trajectories.}
   \For(\Comment*[f]{For each Op candidate.}){i in $1,2,...,M$}
   {
   \For(\Comment*[f]{For each Ego candidate.}){j in $1,2,...,N$}
   {
   $P^{Coll}_{ij} = 
   \begin{cases}
       \alpha^{k} & \text{ if $\text{BBox}(Y^{\text{Ego}}_{j,k})$ collides with $\text{BBox}( Y^{\text{Op}}_{i,k})$ at step $k$,
       } \\
       0 & \text{ otherwise.}
   \end{cases}
   $
   }
   $P(Y^{\text{Op}}_i|\pi,Coll,X) = P^{\text{Op}}_i \sum_{j=1}^{N} P^{\text{Ego}}_j P^{Coll}_{ij}$\Comment*[f]{Compute the posterior probability.}
   }
   $Y^{\text{Op}*} = \arg \max_{Y^{\text{Op}}_i} P(Y^{\text{Op}}_i|\pi,Coll,X)$\Comment*{Select the best Op's trajectory.}
  obs = simulator.reset($X,Y^{ \text{Op}*}$)\Comment*{Reset sim to replay the adversarial scenario.} 
  Initialize $Y^{\text{Ego}} = \{\}$, $P^{\text{Ego}} = 1$. \\
  \For(\Comment*[f]{Rollout the policy against the adversarial scenario.}){t in $1,2,3...,|T|$}
  {
   $\text{act}  \sim \pi(\cdot|\text{obs})$ \\
   obs = simulator.step(act) \\
   $Y^{\text{Ego}} \gets Y^{\text{Ego}} \bigcup \{Y^{\text{Ego}}_t\}$ \Comment*{Update Ego trajectory.}
   $P^{\text{Ego}} \gets  P^{\text{Ego}} \cdot \pi(\text{act}|\text{obs})$  \Comment*{Update Ego probability.}
  }
   $\pi \leftarrow \mathcal{T}(\pi)$ \Comment*{Policy optimization.}
  Add $(Y^{\text{Ego}},P^{\text{Ego}})$ to the ego trajectory buffer for this scenario.
  }
  \caption{Closed-loop Adversarial Training (CAT) for Safe End-to-End Driving.}\label{algo:1}
\end{small}
\end{algorithm}

\section{Experiments}

\subsection{Experiment Setup}

We import 500 real-world traffic scenarios involving complex vehicle interactions from the Waymo Open Motion Dataset (WOMD)~\cite{ettinger2021large} as the raw data. Each scene in WOMD contains a traffic participant labeled as \textit{Object of Interest} regarding the ego car, which is also designated as the opponent vehicle in our experiments. All the experiments are conducted in MetaDrive~\cite{li2022metadrive}, an open-source and lightweight AD simulator. \revise{The specific state, action and reward function in policy training and detailed hyper-parameter settings in safety-critical scenario generation are placed in Appendix C and D.}
Here, we point out some pivotal parameters. Each scene lasts $9$s, in which we take the first $1$s traffic history as $X$ and manipulate the following $8$s to generate the adversarial trajectory $Y^{\text{Op}}$. We set $M=32$ as the number of opponent trajectory candidates, $N=5$ as the length of ego rollout queue and  $\alpha=0.99$ to penalize the uncertainty of motion forecasting.

\subsection{Evaluation of Safety-critical Traffic Generation in CAT}

The factorized safety-critical resampling is the crucial component of CAT to generate adversarial training samples. We provide qualitative and quantitative comparisons with the following baselines: \textbf{(A) Raw Data}: Replaying the recorded real-world traffic. \textbf{(B) M2I (adv)}~\cite{sun2022m2i}: The interactive traffic motion prediction is similar to our factorized formulation and thus can be modified as an adversarial scenario generator. 
\textbf{(C) STRIVE}~\cite{rempe2022generating}: The state-of-the-art safety-critical scenario generation methods performing gradient-based optimization on latent variables.

\renewcommand{\thesubfigure}{\arabic{subfigure}}
\begin{figure}[tb]
  \centering
  \subcaptionbox{Right Turn\label{fig:4-1}}
    {\includegraphics[width=0.31\textwidth]{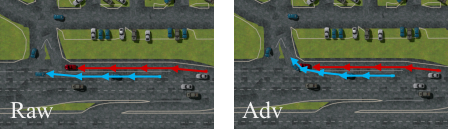}}\hspace{1.5mm}
      \subcaptionbox{Left Turn\label{fig:4-2}}
    {\includegraphics[width=0.31\textwidth]{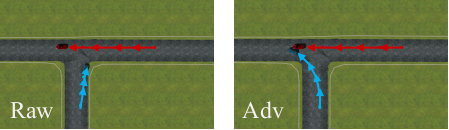}}\hspace{1.5mm}
 \subcaptionbox{U-Turn\label{fig:4-3}}
    {\includegraphics[width=0.31\textwidth]{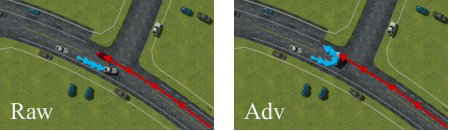}}
    \subcaptionbox{Rear-End\label{fig:4-4}}
    {\includegraphics[width=0.31\textwidth]{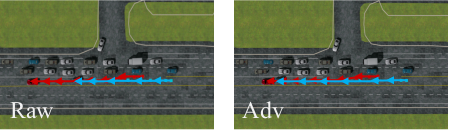}}\hspace{1.5mm}
      \subcaptionbox{Emergent Brake\label{fig:4-5}}
    {\includegraphics[width=0.31\textwidth]{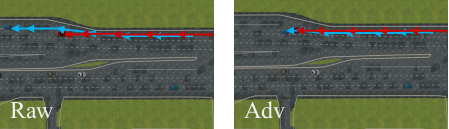}}\hspace{1.5mm}
    \subcaptionbox{Lane Change\label{fig:4-6}}
    {\includegraphics[width=0.31\textwidth]{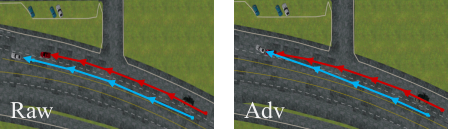}}

    \subcaptionbox{Cross Paths\label{fig:4-7}}
    {\includegraphics[width=0.31\textwidth]{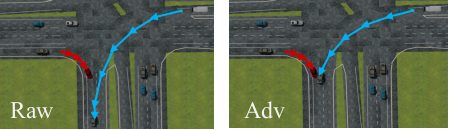}}\hspace{1.5mm}
      \subcaptionbox{Run-Off-Road\label{fig:4-8}}
    {\includegraphics[width=0.31\textwidth]{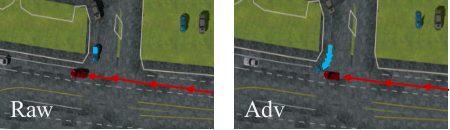}}\hspace{1.5mm}
    \subcaptionbox{Opposite Direction\label{fig:4-9}}
    {\includegraphics[width=0.31\textwidth]{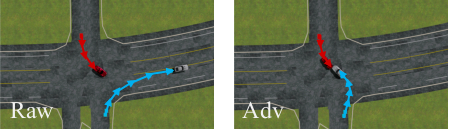}}
  \caption{Qualitative results on the diversity of safety-critical scenarios generated by CAT.  In each subfigure, the left and right are the raw scene and the adversarial counterpart. The \textcolor[RGB]{205,0,0}{ego} and \textcolor[RGB]{0,176,240}{adversarial} trajectories are highlighted with \textcolor[RGB]{205,0,0}{red} and \textcolor[RGB]{0,176,240}{blue} arrows, respectively.}
  \label{fig:exp1}
\end{figure}

\begin{figure}[t]
    \centering
    \includegraphics[width=1\textwidth]{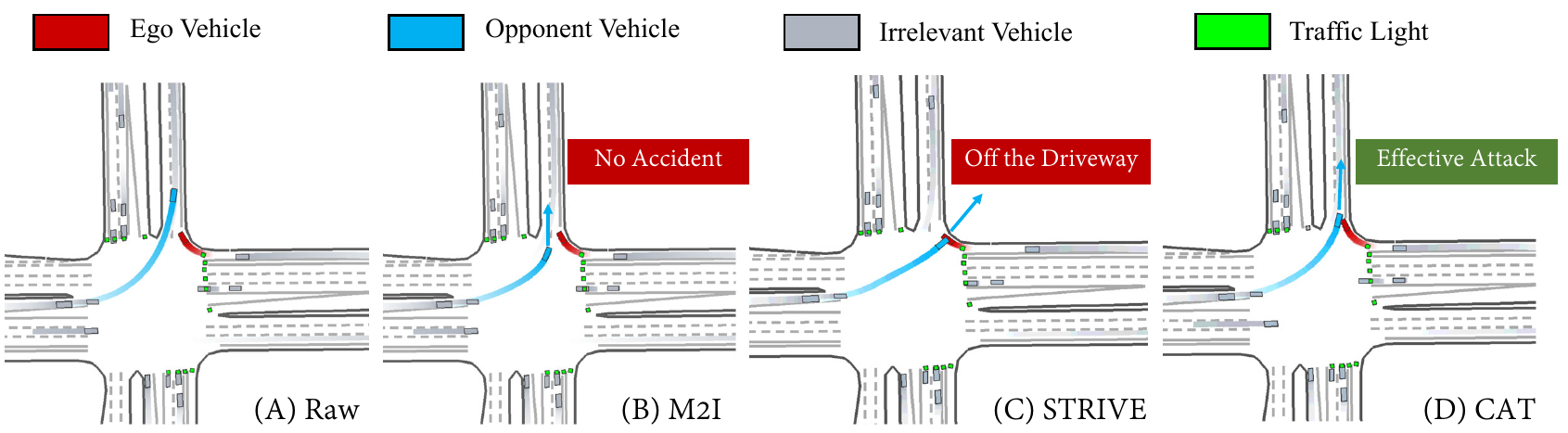}
    \caption{Qualitative results on the plausibility of safety-critical scenarios generated by CAT. The attack is regarded as effective only if leading traffic accidents are consistent with real-world events.}
    \label{fig:exp1_plausi}
\end{figure}

\begin{wraptable}{r}{0.65\textwidth}
\centering
\vspace{-1.2em}
\caption{Comparing adversarial generation methods.} 
\label{tab:baselinecompare}
\small{
\begin{tabular}{cccccc}
\toprule
\multirow{3}{*}{Methods}
&\multicolumn{3}{c}{Attack Success Rate $\uparrow$} 
&
\multirow{3}{*}{ \shortstack{ Per Scene ~\\ Generation Time $\downarrow$}}
\\
\cmidrule{2-4}
& Replay  & IDM  & Pretrained  & ~
\\
\midrule
Raw Data & $0\%$& $34\%$ & $14\%$ & /\\
M2I (adv)  & $47\%$& $41\%$ & $19\%$ & \bm{$0.41\pm0.03$}s \\
STRIVE & $85\%$ & $82\%$ & $66\%$ & $153.10\pm47.33$s \\
\midrule
CAT ($N=1$) &$91\%$ & $71\%$& $62\%$ & $0.66\pm0.09$s\\
CAT ($N=5$) &\bm{$91\%$} & \bm{$86\%$} & \bm{$69\%$}& $3.34\pm0.41$s\\
\bottomrule
\end{tabular}
}
\end{wraptable}
\paragraph{Quantitative analysis.}
\newline
In Tab.~\ref{tab:baselinecompare}, we  compare adversarial traffic generation methods on 100 test scenes, focusing on two metrics. The first metric of interest is the attack success rate as the driving policies are responsive and even defensive to the traffic flow. We adopt three kinds of agents with fixed policies to validate: \textit{(i) Replay Agent}: Replay the original trajectory of the ego vehicle logged in real-world data-set. \textit{(ii) IDM Agent}: A heuristic controller well-adopted in AD tasks~\cite{treiber2000congested}. \textit{(iii) Pre-trained Agent}: A pre-trained RL policy on WOMD. We find that M2I (adv) is insufficient for ego prediction and attacks less effectively especially against low-level policy, which is fatal for end-to-end driving. CAT collects ego rollouts to enhance the confidence of ego estimation during training ($N=5$) and testing ($N=1$) which significantly improves the attack success rate and is competitive with the SOTA method STRIVE.
The second metric of interest is the time consumption per scene, which is non-negligible considering the large number of scenario iterations during training. We find that STRIVE generally requires 2-3 minutes to process a single scene due to its autoregression procedure on the raster map, which means it takes days to train the agent in a closed loop involving thousands of episodes.  By contrast, our approach best balance the attack success rate and computational time compared and admits a privileged advantage in closed-loop adversarial training for end-to-end driving.

\paragraph{Qualitative analysis.} In Fig.~\ref{fig:exp1}, we present 9 different types of safety-critical scenarios that CAT generates from raw scenes, according to the pre-crashed traffic categorized by the National Highway Traffic Safety Administration (NHTSA). It can be concluded that CAT is able to generate adversarial traffic given arbitrary real-world raw scenes. Meanwhile, the generated trajectories are in line with human driver behavior, even though we don't specify prior knowledge of that scene. In Fig.~\ref{fig:exp1_plausi}, we compare the generated adversarial traffic of the four methods on the same intersection. In the raw scene, the leading vehicle turns preferentially and does not cross the path of the ego vehicle. The opponent attempts to collide with the agent at the intersection through the safety-critical generation. However, M2I (adv) has a bias in estimating the reaction of the ego vehicle, which does not cause the expected accident. STRIVE finds the solution to enforce a crash, but it is still cumbersome to tweak the multinomial loss function to balance the goal of colliding as soon as possible and reasonable driving behavior, like keeping the vehicle in the driveway. By contrast, our factorized safety-critical resampling leverages the learned motion prior to regularize the opponent's trajectory, magnifying the traffic risk while preserving its plausibility. More visualization can be found in Appendix E.

\begin{figure}[t]
    \centering
    \includegraphics[width=1\textwidth]{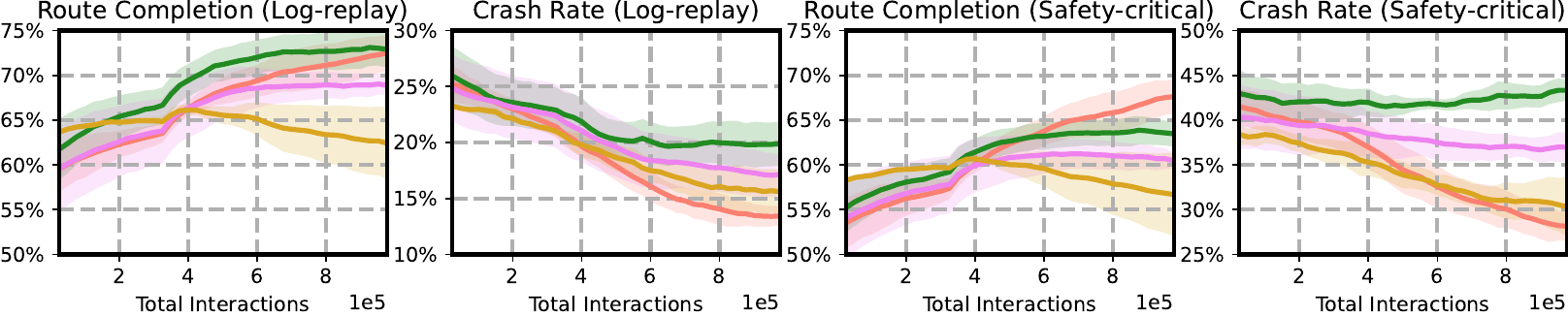}
     \includegraphics[width=0.8\textwidth,trim={0 70 0 70},clip]{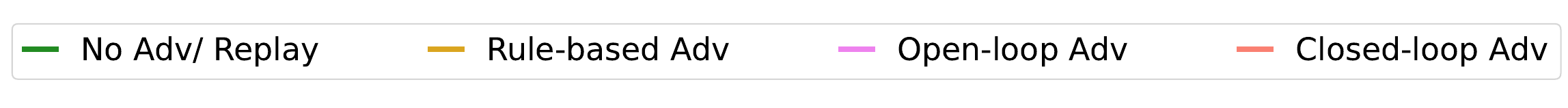}
    \caption{The learning curves of the policies trained with different pipelines. 
    }
    \label{fig:learning-curves}
    \vspace{-0.5em}
\end{figure}

\begin{table}[t]
\centering
\caption{{Performance of driving policies with different training pipelines on the held-out test set.}
} \label{fig:tab_RL}
\small{
\vspace{0.2cm}
\label{tab:RL}
\renewcommand{\arraystretch}{1.3}
\resizebox{1\textwidth}{!}{
\begin{tabular}{p{2.5cm}<{\centering} p{2.75cm}<{\centering} p{2.5cm}<{\centering} p{0.01cm}<{\centering} p{2.75cm}<{\centering} p{2.5cm}<{\centering}}
\toprule
\multirow{2}{*}{\revise{Methods}}
&\multicolumn{2}{c}{\revise{Log-replay Scenarios}} & ~
&
\multicolumn{2}{c}{\revise{Safety-critical Scenarios}}
\\
\cmidrule{2-3}\cmidrule{5-6}
& \revise{Route Completion $\uparrow$}  & \revise{Crash Rate $\downarrow$} & ~ & \revise{Route Completion $\uparrow$}  & \revise{Crash Rate $\downarrow$}
\\
\midrule
\revise{No Adv/ Replay} & \bm{\revise{$72.91\% \pm 2.05\%$}} & \revise{$19.89\% \pm 1.95\%$}  & & \revise{$63.48\% \pm 1.46\%$} & \revise{$43.33\% \pm1.13\%$} \\
\revise{Rule-based Adv} & {\revise{$62.42\% \pm 3.99\%$}} & \revise{$15.61\% \pm 1.98\%$}  & & \revise{$56.68\% \pm 4.66\%$} & \revise{$30.31\% \pm3.33\%$} \\
\revise{Open-loop Adv} &  \revise{$68.89\% \pm 1.05\%$} & \revise{$17.15\% \pm 1.80\%$}& & \revise{$63.48\% \pm 1.46\%$} & \revise{$36.96\% \pm 1.66\%$}  \\
\revise{Closed-loop Adv} & \revise{$72.47\% \pm 2.04\%$} & \bm{\revise{$13.43\% \pm 0.88\%$}}& & \bm{\revise{$67.62\% \pm 1.89\%$ }}& \bm{\revise{$28.15\% \pm 1.63\%$}}   \\
\bottomrule
\end{tabular}
}
}
\end{table}

\subsection{Evaluation of Closed-loop Adversarial Training in CAT}
\revise{
We show how the driving agent improves its safety performance within CAT framework. We split the 500 raw scenes into $400$ training and $100$ testing scenarios. We train a TD3~\cite{fujimoto2018addressing} driving policy from scratch with 4 types of training pipelines: \textbf{(A) No Adv/ Replay}: The raw driving scenarios are used as the training environments. 
\textbf{(B) Rule-based Adv}: We implement a rule-based system that overwrites the trajectories in data to generate physical attacks (see the Appendix F for details). \textbf{(C) Open-loop Adv}: We generate the opponent trajectories that collide with the ego trajectories against the log-replayed ego rollout before training. \textbf{(D) Closed-loop Adv}: We use CAT to generate adversarial scenario on-the-fly against the ego trajectories generated by the learning agent.}

We evaluate the driving policies trained from different pipelines with two metrics.
The first metric is the route completion rate, which measures the progress the agent makes; The second metric is the crash rate, the ratio of episodes that the ego vehicle crashes into others. We first evaluate the policy on the held-out testing scenarios with logged traffic (\textit{Log-replay Scenarios}). Then we run CAT against the policy to generate adversarial traffic. Finally, we run the policy in the testing scenarios with CAT-generated traffic (\textit{Safety-critical Scenarios}).
\revise{As shown in Table~\ref{tab:RL} and Fig.~\ref{fig:learning-curves}, we find that CAT substantially enhances safety performance compared with vanilla RL training, reducing crash rate by $6.46\%$ in log-replayed scenarios and $15.18\%$ in safety-critical ones with competitive route completion. More qualitative results  can be referred in Appendix G.  Besides, we demonstrate that generating adversarial environments against current policy on-the-fly makes the trained policy performs better. At last, factorized safety-critical resampling can preserve the realistic traffic distribution so the learned policy has competitive route completion rate. On the contrary, the rule-based attacks lead to over-conservative driving policy that has inferior route completion.}

\section{Conclusion}
In this paper, we investigate how to improve the safety of end-to-end driving through the lens of safety-critical traffic scenario augmentation. Empirical results demonstrate that the proposed closed-loop adversarial training (CAT) framework can provide realistic physical attacks efficiently during training and enhance AI driving safety performance in the test time. 

\textbf{Limitations:} Following limitations wait to be addressed in future work: (i) we only consider adversarial vehicles in this work but the safety-critical behaviors of pedestrians and cyclists are also of importance for safe driving and yet to be done, it requires the access to a different motion forecasting model; (ii) Experiment on five hundred scenes cannot cover all the accident-prone situations, thus there are other possible failure modes in the resulting agent;  (iii) we only investigate the RL-based driving policy but the adversarial scenarios should also benefit other end-to-end driving approaches, e.g., the human-in-the-loop imitation learning~\cite{peng2022safe,ross2011reduction}. 

\textbf{Transferring to real-world driving:} The proposed adversarial training method and the comparison with prior methods are evaluated in the simulation of one hundred complex traffic scenarios imported from real-world driving dataset~\cite{ettinger2021large}. Thus, the evaluation contains realistic and complex vehicle interactions and shows promise for transferring to real-world settings.

\acknowledgments{This work was supported by the National Science Foundation under Grant No. 2235012 and the Cisco Faculty Award.}

\bibliography{example}  

\clearpage
\appendix
\section{Proof of Proposition 1}
\setcounter{equation}{0}
\renewcommand\theequation{A.\arabic{equation}}
\begin{aproposition}
Suppose that $\pi$ forces the agent to approach the destination and the episode terminates when any traffic collision happens, then we have
\begin{equation}
       \mathop{\min}_{f^{Adv}\in \mathcal{F}} J(\pi, f^{Adv}) \Leftrightarrow \mathop{\max}_{\boldsymbol{{Y}}^{\text{Op}}} \ \ \sum_{{Y}^{\text{Ego}}\sim\mathcal{Y}(\pi)}\mathbb{P}({Y}^{\text{Ego}}, \boldsymbol{{Y}}^{\text{Op}} |  Coll = True , X).
\end{equation}
\end{aproposition}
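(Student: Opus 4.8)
The plan is to unpack both sides of the claimed equivalence in terms of the episode return and show they are driven by the same quantity: the probability that the ego rollout collides with the opponent trajectory. First I would observe that, under the stated assumptions, every episode is one of two types. Either a collision occurs, in which case the episode terminates at the collision step; by the reward definition $R = d - \eta c$ the agent accumulates only the partial displacement reward up to that step and pays the collision penalty $\eta$. Or no collision occurs, in which case the agent runs to the horizon $T$ and, since $\pi$ drives it toward the destination, collects essentially the full route-completion reward with no penalty. Writing $J(\pi,f^{Adv})$ as an expectation over ego trajectories ${Y}^{\text{Ego}}\sim\mathcal{Y}(\pi)$ and over the (now fixed, deterministic replay) opponent trajectories $\boldsymbol{{Y}}^{\text{Op}}$ induced by $f^{Adv}$, I would split the sum into the colliding and non-colliding parts and bound the difference between a colliding episode's return and a safe episode's return below by a strictly positive constant (roughly: the forgone remaining displacement plus $\eta$). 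This makes $J(\pi,f^{Adv})$ a strictly decreasing affine function of the total collision probability $\sum_{{Y}^{\text{Ego}}\sim\mathcal{Y}(\pi)}\mathbb{P}({Y}^{\text{Ego}},\boldsymbol{{Y}}^{\text{Op}}\mid X)\,\mathbb{P}(Coll=True\mid {Y}^{\text{Ego}},\boldsymbol{{Y}}^{\text{Op}})$, up to terms that do not depend on $\boldsymbol{{Y}}^{\text{Op}}$.

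Next I would connect that collision probability to the posterior appearing on the right-hand side. By Bayes' rule, $\mathbb{P}({Y}^{\text{Ego}},\boldsymbol{{Y}}^{\text{Op}}\mid Coll=True,X)$ is proportional, with a normalizing factor $1/\mathbb{P}(Coll=True\mid X)$ that is independent of $\boldsymbol{{Y}}^{\text{Op}}$ once we marginalize appropriately, to $\mathbb{P}({Y}^{\text{Ego}},\boldsymbol{{Y}}^{\text{Op}}\mid X)\,\mathbb{P}(Coll=True\mid {Y}^{\text{Ego}},\boldsymbol{{Y}}^{\text{Op}})$. Summing over ${Y}^{\text{Ego}}\sim\mathcal{Y}(\pi)$ therefore shows that maximizing $\sum_{{Y}^{\text{Ego}}}\mathbb{P}({Y}^{\text{Ego}},\boldsymbol{{Y}}^{\text{Op}}\mid Coll=True,X)$ over $\boldsymbol{{Y}}^{\text{Op}}$ is the same optimization problem as maximizing the total collision probability over $\boldsymbol{{Y}}^{\text{Op}}$. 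Chaining the two steps: minimizing $J(\pi,f^{Adv})$ over $f^{Adv}\in\mathcal{F}$ maximizes the collision probability over the induced $\boldsymbol{{Y}}^{\text{Op}}$, which in turn maximizes the summed posterior — giving the stated $\Leftrightarrow$. I would also note that $f^{Adv}\in\mathcal{F}$ corresponds precisely to admissible opponent trajectories $\boldsymbol{{Y}}^{\text{Op}}$ drawn from the realistic traffic support, so the feasible sets on the two sides match.

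I expect the main obstacle to be making the first step rigorous rather than heuristic: carefully justifying that the difference in return between a "crash" episode and a "safe" one is bounded away from zero uniformly, which is where the two assumptions ($\pi$ monotonically approaching the destination, and episode termination on collision) do real work. One has to rule out the degenerate possibility that a collision happens so late, or with so much displacement already banked, that crashing is essentially as good as finishing — the termination-on-collision assumption plus a mild assumption that some remaining displacement is always forgone (or simply $\eta>0$) handles this, but it should be stated explicitly. A secondary subtlety is bookkeeping the normalization constant $\mathbb{P}(Coll=True\mid X)$ and confirming it does not depend on the optimization variable $\boldsymbol{{Y}}^{\text{Op}}$; since we are taking $\boldsymbol{{Y}}^{\text{Op}}$ as the argmax variable and the prior $\mathbb{P}(\boldsymbol{{Y}}^{\text{Op}}\mid X)$ is already folded into the joint, this is really just a statement that the normalizer is a property of $X$ and $\pi$ alone, which I would spell out in one line. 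The remaining manipulations (splitting the sum, factoring constants, reversing $\min$/$\max$ via the sign flip) are routine.
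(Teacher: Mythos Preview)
Your proposal is correct and follows the same overall skeleton as the paper: use the two assumptions to reduce $\min_{f^{Adv}} J$ to ``make a collision happen,'' then pass to the posterior over $(Y^{\text{Ego}},\boldsymbol{Y}^{\text{Op}})$ given $Coll=True$, and finally marginalize over $Y^{\text{Ego}}\sim\mathcal{Y}(\pi)$. The one substantive difference is the intermediate object. The paper first rewrites the inner minimization as a \emph{hard-constrained} prior maximization, $\max_{\boldsymbol{Y}^{\text{Op}}}\mathbb{P}(\boldsymbol{Y}^{\text{Op}}\mid X)$ subject to ``Ego controlled by $\pi$ collides with Op,'' and only then identifies this with the posterior $\mathbb{P}(\boldsymbol{Y}^{\text{Op}}\mid \pi, Coll=True, X)$. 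You instead go straight through the \emph{collision probability} $\sum_{Y^{\text{Ego}}}\mathbb{P}(Y^{\text{Ego}},\boldsymbol{Y}^{\text{Op}}\mid X)\,\mathbb{P}(Coll\mid Y^{\text{Ego}},\boldsymbol{Y}^{\text{Op}})$ and invoke Bayes once. Your route is arguably cleaner for stochastic $\pi$, since the paper's hard constraint ``Ego collides with Op'' is only well-defined when collision is deterministic in $\boldsymbol{Y}^{\text{Op}}$; your formulation handles the stochastic case natively. Conversely, the paper's framing makes more transparent why the feasible set $\mathcal{F}$ enters as a restriction to the support of the traffic prior. Your caveat about the ``affine'' claim is well placed: the return of a colliding episode depends on the collision time, so strictly speaking $J$ is monotone (not affine) in collision probability, and the gap argument you sketch (forgone displacement plus $\eta>0$) is exactly what is needed; the paper's proof glosses over this same point with the single sentence ``$J$ is minimized when encountering collisions.''
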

\begin{proof}
According to the definition of return $J(\pi)$ and reward function $R = d - \sigma c$, we have
\begin{equation}
\begin{aligned}
     \mathop{\min}_{f^{Adv}\in\mathcal{F}} J(\pi, f^{Adv}) \Leftrightarrow   \mathop{\min}_{f^{Adv}\in\mathcal{F}} \sum (d-\sigma c)
\end{aligned}
\end{equation}
Since $\pi$ forces the agent to approach the destination and the episode terminates when any traffic collision happens, $J$ is minimized when encountering collisions; otherwise $J = \sum d$ reaches its upper bound. Considering that the construction of $f^{Adv}$ is to maneuver the surrounding vehicles when the map is given, it equals that we search the best constraint-satisfying $\boldsymbol{{Y}}^{\text{Op}}$ in the prior trajectory distribution. Thus, we have
\begin{equation}
\begin{aligned}\label{A3}
     \mathop{\min}_{f^{Adv}\in\mathcal{F}} J(\pi, f^{Adv}) \Leftrightarrow   \mathop{\max}_{\boldsymbol{{Y}}^{\text{Op}}} \ \ \mathbb{P}(\boldsymbol{{Y}}^{\text{Op}}|X) \quad \text{s.t} \ \ \text{Ego}\ controlled\ by\ \pi \ collides\ with\ \text{Op}.
\end{aligned}
\end{equation}
We then rewrite Eq.~\eqref{A3} in the form of posterior probability distribution maximization as
\begin{equation}
\begin{aligned}\label{A4}
     \mathop{\min}_{f^{Adv}\in\mathcal{F}} J(\pi, f^{Adv}) \Leftrightarrow   \mathop{\max}_{\boldsymbol{{Y}}^{\text{Op}}} \ \ \mathbb{P}(\boldsymbol{{Y}}^{\text{Op}}|\pi,Coll=True,X)
\end{aligned}
\end{equation}
Suppose that ${Y}^{\text{Ego}}$ generated by the current driving policy $\pi$ can be sampled from $\mathcal{Y}(\pi)$, Eq.~\eqref{A4} is equivalent to marginal maximization over the joint trajectory distribution, which follows as
\begin{equation}
\begin{aligned}
      \mathop{\min}_{f^{Adv}\in \mathcal{F}} J(\pi, f^{Adv}) \Leftrightarrow \mathop{\max}_{\boldsymbol{{Y}}^{\text{Op}}} \ \ \sum_{{Y}^{\text{Ego}}\sim\mathcal{Y}(\pi)}\mathbb{P}({Y}^{\text{Ego}}, \boldsymbol{{Y}}^{\text{Op}} |  Coll = True , X).
\end{aligned}
\end{equation}

The proof of Proposition 1 is completed.
\end{proof}

\section{Proof of Proposition 2}
\setcounter{equation}{0}
\renewcommand\theequation{B.\arabic{equation}}
\begin{aproposition}
Suppose that ${Y}^{\text{Ego}}$ depends on $\boldsymbol{{Y}}^{\text{Op}}$ unidirectionally, then we have 
\begin{equation}
    \mathbb{P}({Y}^{\text{Ego}},\boldsymbol{{Y}}^{\text{Op}} | Coll = True, X) \propto \mathbb{P}(\boldsymbol{{Y}}^{\text{Op}}|{X}) \mathbb{P}({{Y}}^{\text{Ego}}|\boldsymbol{{Y}}^{\text{Op}},{X})\mathbb{P}(Coll=True|{Y}^{\text{Ego}},\boldsymbol{{Y}}^{\text{Op}}).
\end{equation}
\end{aproposition}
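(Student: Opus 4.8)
The plan is to start from the definition of conditional probability applied to the joint event $\{Coll = True\}$ together with the trajectories, and then peel off factors one at a time using the chain rule, discarding normalizing quantities that do not depend on the optimization variable $\boldsymbol{Y}^{\text{Op}}$ (hence the $\propto$ rather than $=$). Concretely, I would first write
\begin{equation*}
\mathbb{P}({Y}^{\text{Ego}},\boldsymbol{{Y}}^{\text{Op}} \mid Coll = True, X)
= \frac{\mathbb{P}({Y}^{\text{Ego}},\boldsymbol{{Y}}^{\text{Op}}, Coll = True \mid X)}{\mathbb{P}(Coll = True \mid X)}.
\end{equation*}
The denominator $\mathbb{P}(Coll = True \mid X)$ is a constant once $X$ is fixed, so it may be absorbed into the proportionality constant. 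This is the step that explains why an equality becomes a proportionality in the statement.

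Next I would factor the numerator by the chain rule, ordering the variables as $\boldsymbol{Y}^{\text{Op}}$ first, then ${Y}^{\text{Ego}}$, then $Coll$:
\begin{equation*}
\mathbb{P}({Y}^{\text{Ego}},\boldsymbol{{Y}}^{\text{Op}}, Coll = True \mid X)
= \mathbb{P}(\boldsymbol{{Y}}^{\text{Op}} \mid X)\,
\mathbb{P}({{Y}}^{\text{Ego}} \mid \boldsymbol{{Y}}^{\text{Op}}, X)\,
\mathbb{P}(Coll = True \mid {Y}^{\text{Ego}}, \boldsymbol{{Y}}^{\text{Op}}, X).
\end{equation*}
The first two factors are already in the desired form. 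For the third factor I would invoke the modeling assumption that whether a collision occurs is entirely determined by the two future trajectories (the bounding-box overlap described in the practical implementation), so that $Coll$ is conditionally independent of $X$ given $({Y}^{\text{Ego}}, \boldsymbol{{Y}}^{\text{Op}})$; this drops the $X$ from the last conditioning set and yields $\mathbb{P}(Coll = True \mid {Y}^{\text{Ego}}, \boldsymbol{{Y}}^{\text{Op}})$. Combining the three displays gives exactly the claimed factorization.

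The role of the hypothesis that ${Y}^{\text{Ego}}$ depends on $\boldsymbol{Y}^{\text{Op}}$ "unidirectionally" deserves care, and this is the step I expect to be the main obstacle — not technically, but in pinning down what precisely is being assumed. The chain-rule factorization above is an identity and needs no assumption; the unidirectionality is what licenses writing $\mathbb{P}({{Y}}^{\text{Ego}} \mid \boldsymbol{{Y}}^{\text{Op}}, X)$ as a well-defined causal/conditional object (the ego reacts to the opponents' plan but the opponents' prior $\mathbb{P}(\boldsymbol{Y}^{\text{Op}} \mid X)$ is not in turn conditioned on ${Y}^{\text{Ego}}$), i.e. it rules out a feedback loop that would otherwise force a simultaneous fixed-point characterization rather than a clean product. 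I would state this as: conditioned on $X$, the generative order is $\boldsymbol{Y}^{\text{Op}} \to {Y}^{\text{Ego}} \to Coll$, and then the factorization is just the chain rule along that order together with the conditional independence of $Coll$ from $X$ noted above. I would close by remarking that the suppressed proportionality constant $1/\mathbb{P}(Coll = True \mid X)$ is harmless for the subsequent argmax over $\boldsymbol{Y}^{\text{Op}}$ in Eq.~\eqref{eq:4}.
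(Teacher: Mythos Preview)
Your proposal is correct and follows essentially the same route as the paper: apply Bayes' rule to peel off $\mathbb{P}(Coll=True\mid X)$ as a proportionality constant, factor the remaining joint via the chain rule in the order $\boldsymbol{Y}^{\text{Op}}\to Y^{\text{Ego}}\to Coll$, and drop $X$ from the collision term by conditional independence. Your version is arguably tidier than the paper's (you keep everything conditioned on $X$ throughout rather than passing through the joint $\mathbb{P}(Y^{\text{Ego}},\boldsymbol{Y}^{\text{Op}},X)$ and then discarding $\mathbb{P}(X)$), and your remark that the chain-rule step itself is an identity---with unidirectionality serving to justify the generative ordering rather than the algebra---is a sharper reading of the assumption than the paper's own phrasing.
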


\begin{proof}

According to Bayes theorem, we have
\begin{equation}
    \mathbb{P}({Y}^{\text{Ego}},\boldsymbol{{Y}}^{\text{Op}} | Coll = True,X)
    \propto\mathbb{P}(Coll = True | {Y}^{\text{Ego}},\boldsymbol{{Y}}^{\text{Op}},X)  \mathbb{P}({Y}^{\text{Ego}},\boldsymbol{{Y}}^{\text{Op}},X) \label{ref:1}
\end{equation}

Since $Coll$ merely depends on ${Y}^{\text{Ego}}$ and $\boldsymbol{{Y}}^{\text{Op}}$, \eqref{ref:1} is equivalent to
\begin{equation}
    \mathbb{P}({Y}^{\text{Ego}},\boldsymbol{{Y}}^{\text{Op}} | Coll = True,X)
    \propto\mathbb{P}(Coll = True | {Y}^{\text{Ego}},\boldsymbol{{Y}}^{\text{Op}})  \mathbb{P}({Y}^{\text{Ego}},\boldsymbol{{Y}}^{\text{Op}},X) \label{ref:2}
\end{equation}

Since we assume that ${Y}^{\text{Ego}}$ depends on $\boldsymbol{{Y}}^{\text{Op}}$ unidirectionally;
continuing with Bayes theorem, we have
\begin{equation}
\begin{aligned}
     \mathbb{P}({Y}^{\text{Ego}},\boldsymbol{{Y}}^{\text{Op}} | & Coll = True,X)\\
    \propto \ \ &  \mathbb{P}(Coll = True | {Y}^{\text{Ego}},\boldsymbol{{Y}}^{\text{Op}})  \mathbb{P}({Y}^{\text{Ego}}|\boldsymbol{{Y}}^{\text{Op}},X) \mathbb{P}(\boldsymbol{{Y}}^{\text{Op}},X) \\
    \propto \ \ &  \mathbb{P}(Coll = True | {Y}^{\text{Ego}},\boldsymbol{{Y}}^{\text{Op}})  \mathbb{P}({Y}^{\text{Ego}}|\boldsymbol{{Y}}^{\text{Op}},X) \mathbb{P}(\boldsymbol{{Y}}^{\text{Op}}|X) \mathbb{P}(X)
    \label{ref:3}
\end{aligned}
\end{equation}

Since the past state $X$ is given, we can omit the last item $\mathbb{P}(X)$ in \eqref{ref:3}. Therefore, it holds that
\begin{equation}
    \mathbb{P}({Y}^{\text{Ego}},\boldsymbol{{Y}}^{\text{Op}} | Coll = True,X) \propto \mathbb{P}(\boldsymbol{{Y}}^{\text{Op}}|{X}) \mathbb{P}({{Y}}^{\text{Ego}}|\boldsymbol{{Y}}^{\text{Op}},{X})\mathbb{P}(Coll=True|{Y}^{\text{Ego}},\boldsymbol{{Y}}^{\text{Op}})
\end{equation}

The proof of Proposition 2 is completed.
\end{proof}

\clearpage

\section{RL Experimental Settings}
\revise{
We implement CAT in MetaDrive~\cite{li2022metadrive}. MetaDrive simulator provides off-the-self RL environments for end-to-end driving.  We follow the basic setting in MetaDrive\footnote{https://metadrive-simulator.readthedocs.io/en/latest/index.html}. }

In MetaDrive RL environments, the state includes maps sensor readings (Camera or LiDAR), high-level navigation command and self vehicle states.  In our experiments, we use 2D LiDAR as the sensor to detect the surrounding vehicles, road boundaries and road lines. The state vector consists of three parts: 
\begin{itemize}
    \item Ego State: current states such as the steering, heading, velocity. (ii) Navigation: the navigation information that guides the vehicle toward the destination. Concretely, MetaDrive first computes the route from the spawn point to the destination of the ego vehicle. 
    \item Navigation: the navigation information that guides the vehicle toward the destination. Concretely, MetaDrive first computes the route from the spawn point to the destination of the ego vehicle. Then a set of checkpoints are scattered across the whole route with certain intervals. The relative distance and direction to the next checkpoint and the next next checkpoint will be given as the navigation information.
    \item Surrounding: the surrounding information is encoded by a vector containing the Lidar-like cloud points. We use 72 lasers to scan the neighboring area with radius 50 meters.
\end{itemize}

\revise{
The action consists of low-level control commands like steering, throttle and brake. MetaDrive receives normalized action as input to control each target vehicle: $\mathbf a = [a_1, a_2]^T \in [-1, 1]^2$. At each environmental time step, MetaDrive converts the normalized action into the steering \(u_s\) (degree), acceleration \(u_a\) (hp) and brake signal \(u_b\) (hp) in the following ways: (i) $u_s = S_{max} a_1 $, (ii) $ u_a = F_{max} \max(0, a_2)$ , (iii) $u_b = -B_{max} \min(0, a_2)$, wherein $S_{max}$ (degree) is the maximal steering angle, $F_{max}$ (hp) is the maximal engine force, and $B_{max}$ (hp) is the maximal brake force. 
}

\revise{MetaDrive uses a compositional reward function as $R = R_{driving} + R_{crash\_vehicle\_penalty} + R_{out\_of\_road\_penalty}$. Here, the driving reward $R_{driving} = d_t - d_{t-1}$, wherein the $d_t$ and $d_{t-1}$ denote the longitudinal coordinates of the target vehicle in the current lane of two consecutive time steps, providing dense reward to encourage agent to move forward. By default, the penalty is -1 if the agent collides with surrounding vehicles, and the penalty is -10 if the agent runs out of the road.}

\section{Hyper-parameter Settings}
\begin{multicols}{3}
\begin{table}[H]
\arrayrulecolor{black}
\centering
\caption{CAT}
\begin{tabular}{@{}ll@{}}
\toprule
Hyper-parameter             & Value  \\ \midrule
Scenario Horizon $T$    & 9s\\
History Horizon $t$  & 1s\\
\# of $Y^{\text{Op}}$ candidates $M$   & 32  \\
\# of $Y^{\text{Ego}}$ candidates $N$   & 5  \\
Penalty Factor $\alpha$  & 0.99\\
Policy Training Steps & 10E6\\
\bottomrule
\end{tabular}
\end{table}
\begin{table}[H]
\centering
\caption{TD3}
\begin{tabular}{@{}ll@{}}
\toprule
Hyper-parameter             & Value  \\ \midrule
Discounted Factor $\gamma$   & 0.99  \\
Train Batch Size & 256\\
Critic Learning Rate & 3E-4\\
Actor Learning Rate & 3E-4\\
Policy Delay & 2\\
Target Network $\tau$ & 0.005 \\
\bottomrule
\end{tabular}
\end{table}
\begin{table}[H]
\centering
\caption{DenseTNT and M2I}
\begin{tabular}{@{}ll@{}}
\toprule
Hyper-parameter             & Value  \\ \midrule
Train Batch size & 256\\
Train Epoches & 30\\
Sub Graph Depth & 3\\
Global Graph Depth & 1\\
NMS Threshold & 7.2\\
Number of Mode & 32 \\
\bottomrule
\end{tabular}
\end{table}
\end{multicols}


\section{Qualitative Results of Safety-critical Traffic Generation}

\begin{figure}[H]
    \centering
\includegraphics[height=0.9\textheight]{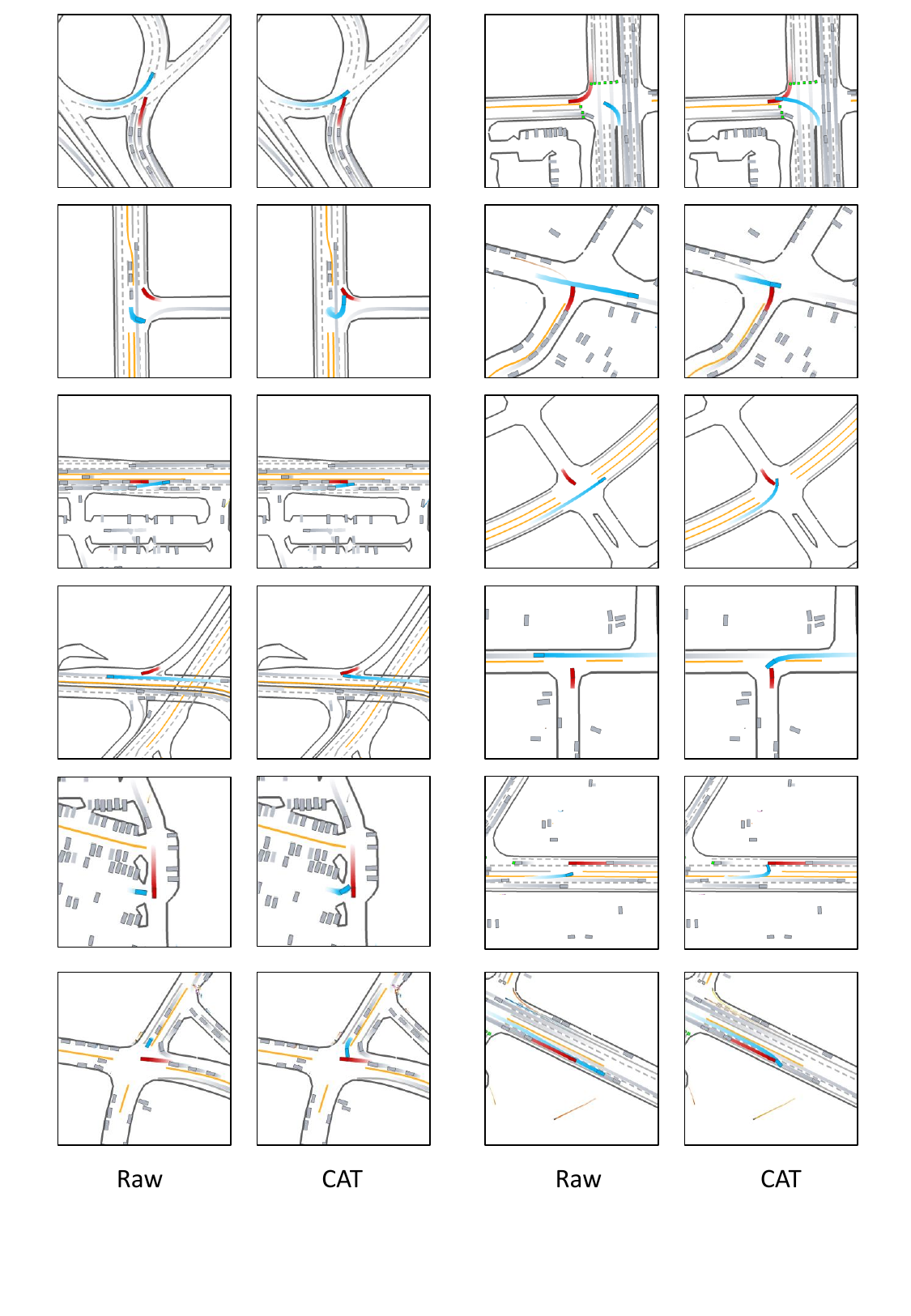}
\caption{More comparison between the original scenarios in raw datasets and the safety-critical scenarios generated by CAT. The red car is the ego vehicle and the blue car is the opponent vehicle.}
\end{figure}

\begin{figure}[H]
    \centering
\includegraphics[width=0.9\textwidth]{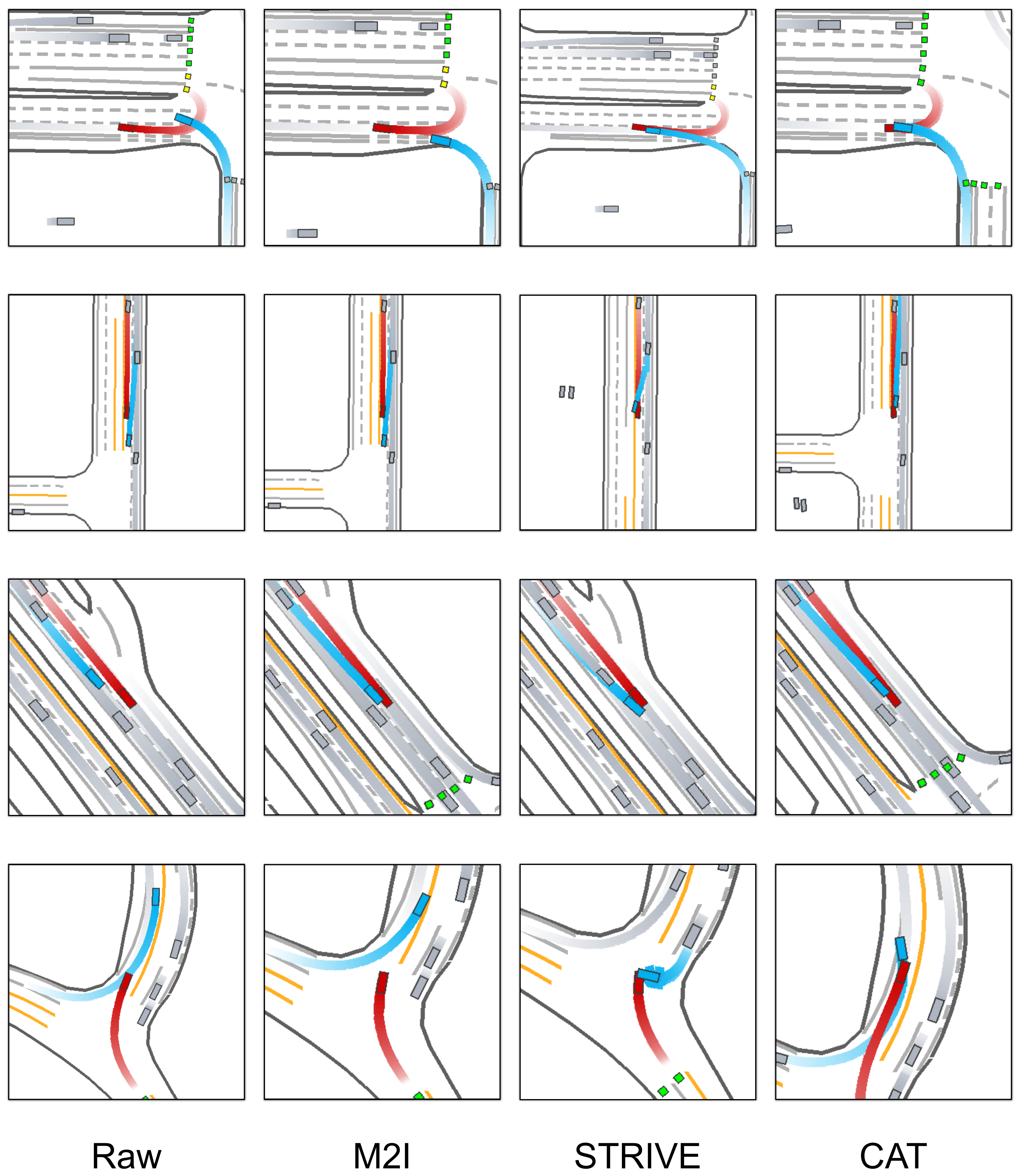}
\caption{Comparing the different scenario generation methods. The red car is the ego vehicle and the blue car is the opponent vehicle.}\label{figure:appendix-demo-4methods}
\end{figure}

\section{\revise{Details of the Rule-based Adversarial Traffic Generation}}
\begin{figure}[H]
    \centering
\includegraphics[width=0.75\textwidth]{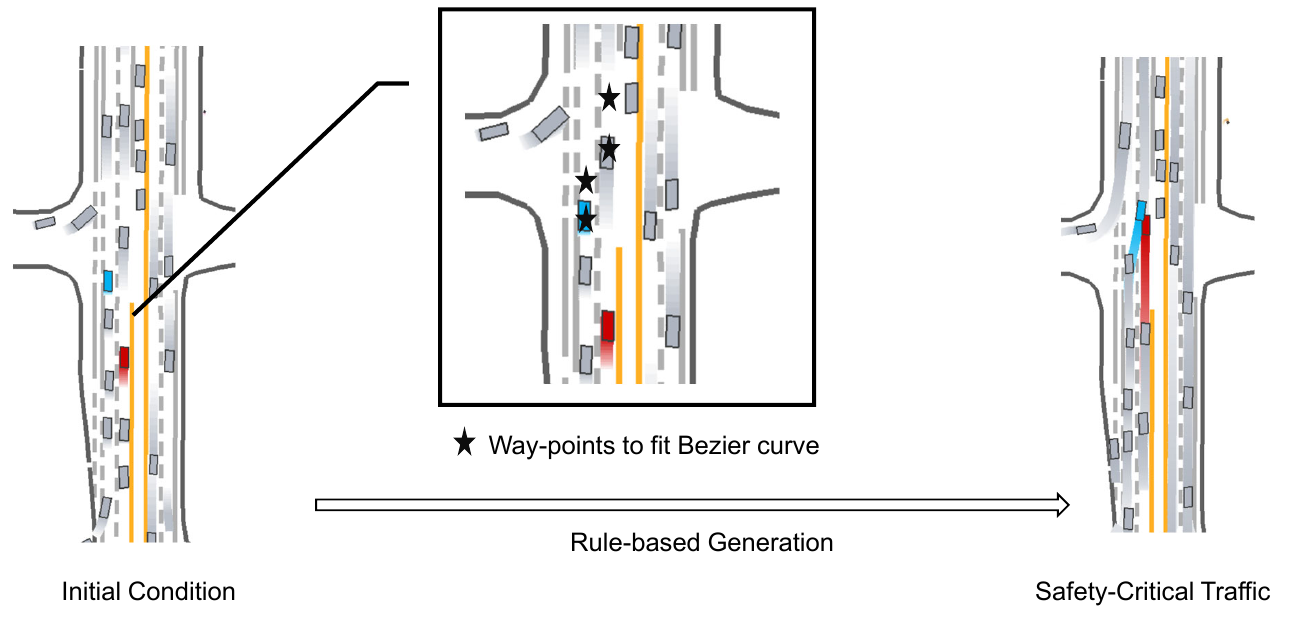}
\caption{\revise{An example of the rule-based adversarial traffic generation.}}
\label{figure:rb}
\end{figure}

\revise{Considering the HD-map in Waymo datasets are highly unstructured, thus we design a rule-based system as follows:}
\begin{enumerate}
    \item \revise{We heuristically take the vehicle labeled as `Object of Interest' as the adversary.}
    \item \revise{We take some waypoints on the navigation path of the ego vehicle, which will be occupied by the adversary later to minimize the ego vehicle's drivable area.}
    \item \revise{We mix above waypoints with those on the original path of the adversarial vehicle.}
    \item  \revise{We fit a Bezier curve based on all the way-points to derive a smooth and feasible path of the rival vehicle. }
\end{enumerate}

\section{Qualitative Results of Safety Improvement after CAT}

\begin{figure}[H]
    \centering
\includegraphics[width=0.9\textwidth]{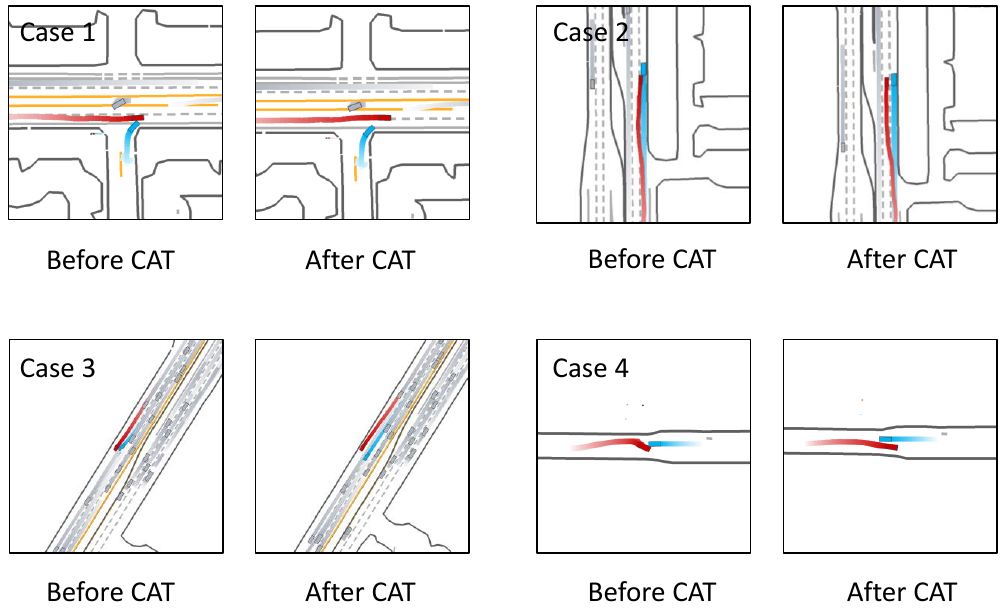}
\caption{Driving behaviour before and after CAT. The red car is the ego vehicle and the blue car is the opponent vehicle. In case 1, the opponent car makes an unprotected left turn at an intersection; the driving agent learns to stay away from potentially dangerous vehicles. In case 2, the leading car slows down; the driving agent learns to change its lane and overtake. In case 3, the opponent car cuts into the lane suddenly; the driving agent learns to yield and change its lane ahead of time. In case 4, two vehicles traveling in opposite directions meet and the driving agents learns to pass by.}
\end{figure}

\end{document}